\theoremstyle{plain} \newtheorem{thm}{Theorem}
 \newtheorem{lem}[thm]{Lemma}
\theoremstyle{definition} \newtheorem{defn}{Definition}
\theoremstyle{remark} 
\title{Anomaly Detection with Score functions based on Nearest Neighbor Graphs}
\author{
Manqi Zhao
\\
ECE Dept.\\
Boston University\\
Boston, MA 02215 \\
\texttt{mqzhao@bu.edu} \\
\And
Venkatesh Saligrama \\
ECE Dept.\\
Boston University\\
Boston, MA, 02215 \\
\texttt{srv@bu.edu} \\
}
\begin{document}

\maketitle

\begin{abstract}
We propose a novel non-parametric adaptive anomaly detection
algorithm for high dimensional data based on
score functions derived from nearest neighbor graphs on $n$-point nominal data. Anomalies
are declared whenever the score of a test sample falls below
$\alpha$, which is supposed to be the desired false alarm level. The
resulting anomaly detector is shown to be asymptotically optimal in that it is
uniformly most powerful for the specified false alarm level,
$\alpha$, for the case when the anomaly density is a mixture of the
nominal and a known density. Our
algorithm is computationally efficient, being linear in dimension and quadratic in data size. It does not require choosing complicated tuning parameters or function approximation classes and it can adapt to local structure such as local change in dimensionality.
We demonstrate the
algorithm on both artificial and real data sets in high dimensional
feature spaces.
\end{abstract}

\section{Introduction}\label{sec:Intro}
Anomaly detection involves detecting statistically significant
deviations of test data from nominal distribution. In typical
applications the nominal distribution is unknown and generally
cannot be reliably estimated from nominal training data due to a
combination of factors such as limited data size and high
dimensionality.

We propose an adaptive non-parametric method for anomaly detection
based on score functions that maps data samples to the interval
$[0,\,1]$. Our score function is derived from a K-nearest neighbor
graph (K-NNG) on $n$-point nominal data. Anomaly is declared whenever the score of a test
sample falls below $\alpha$ (the desired false alarm error). The efficacy of our method rests upon its close
connection to multivariate p-values. In statistical hypothesis
testing, p-value is any transformation of the feature space to the interval $[0,1]$ that induces a uniform distribution on the nominal data.
When test samples
with p-values smaller than $\alpha$ are declared as anomalies, false alarm error is less than $\alpha$.

We develop a novel notion of p-values based on measures of level
sets of likelihood ratio functions. Our notion provides a characterization of the optimal
anomaly detector, in that, it is uniformly most powerful for a
specified false alarm level for the case when the anomaly density is
a mixture of the nominal and a known density. We show that our score function is
asymptotically consistent, namely, it converges to our multivariate p-value as data length approaches infinity.

Anomaly detection has been extensively studied. It is also referred
to as novelty detection \cite{Campbell, Markou}, outlier detection
\cite{Ramaswamy}, one-class classification \cite{Vert, Tax1} and
single-class classification \cite{Yaniv} in the literature.
Approaches to anomaly detection can be grouped into several
categories. In parametric approaches~\cite{basseville} the nominal
densities are assumed to come from a parameterized family and
generalized likelihood ratio tests are used for detecting deviations
from nominal. It is difficult to use parametric approaches when the distribution is unknown and data is limited.
A K-nearest neighbor (K-NN) anomaly detection approach is presented in \cite{Ramaswamy,Zhang2}. There
an anomaly is declared whenever the distance to the K-th nearest
neighbor of the test sample falls outside a threshold. In comparison
our anomaly detector utilizes the global information available from
the entire K-NN graph to detect deviations from the nominal. In addition it has provable optimality properties.
Learning theoretic approaches attempt to find
decision regions, based on nominal data, that separate nominal
instances from their outliers. These include one-class SVM of
Sch$\mathrm{\ddot{o}}$lkopf et. al. \cite{Scholkopf} where the basic
idea is to map the training data into the kernel space and to
separate them from the origin with maximum margin. Other algorithms
along this line of research include support vector data description
\cite{Tax},  linear programming approach \cite{Campbell}, and single
class minimax probability machine \cite{Lanckriet}.
While these approaches provide impressive computationally efficient solutions on real data, it is generally difficult to precisely relate tuning parameter choices to desired false alarm probability.

Scott and Nowak \cite{Scott1} derive decision regions based on
minimum volume (MV) sets, which does provide Type I and Type II error control.
They approximate (in appropriate function classes) level sets of the
unknown nominal multivariate density from training samples. Related
work by Hero \cite{Hero1} based on geometric entropic minimization
(GEM) detects outliers by comparing test samples to the most
concentrated subset of points in the training sample. This most
concentrated set is the $K$-point minimum spanning tree(MST) for
$n$-point nominal data and converges asymptotically to the minimum
entropy set (which is also the MV set). Nevertheless, computing
$K$-MST for $n$-point data is generally intractable. To overcome
these computational limitations \cite{Hero1} proposes heuristic
greedy algorithms based on leave-one out K-NN graph, which while
inspired by $K$-MST algorithm is no longer provably optimal. Our
approach is related to these latter techniques, namely, MV sets of
\cite{Scott1} and GEM approach of \cite{Hero1}. We develop score
functions on K-NNG which turn out to be the empirical estimates of
the volume of the MV sets containing the test point. The volume,
which is a real number, is a sufficient statistic for ensuring
optimal guarantees. In this way we avoid explicit
high-dimensional level set computation. Yet our algorithms lead to statistically optimal solutions with the ability to control false alarm and miss error probabilities.
%


The main features of our anomaly detector are summarized. (1) Like
\cite{Hero1} our algorithm scales linearly with dimension and
quadratic with data size and can be applied to high dimensional
feature spaces. (2) Like \cite{Scott1} our algorithm is provably
optimal in that it is uniformly most powerful for the specified
false alarm level, $\alpha$, for the case that the anomaly density
is a mixture of the nominal and any other density (not necessarily
uniform). (3) We do not require assumptions of linearity,
smoothness, continuity of the densities or the convexity of the
level sets. Furthermore, our algorithm adapts to the inherent
manifold structure or local dimensionality of the nominal density.
(4) Like \cite{Hero1} and unlike other learning theoretic approaches
such as \cite{Scholkopf,Scott1} we do not require choosing complex
tuning parameters or function approximation classes.

\section{Anomaly Detection Algorithm: Score functions based on K-NNG}\label{sec:alg}
In this section we present our basic algorithm devoid of any statistical context. Statistical analysis appears in Section~\ref{sec:analysis}.
Let $S=\{x_1,x_2,\cdots,x_n\}$ be
the nominal training set of size $n$ belonging to the unit cube $[0,1]^d$. For notational convenience we use $\eta$ and $x_{n+1}$ interchangeably to denote a test
point. Our task is to declare whether the test point is consistent with nominal data or deviates from the nominal data. If the test point is an anomaly it is assumed to come from a mixture of nominal distribution underlying the training data and another known density (see Section~\ref{sec:analysis}).

Let $d(x,y)$ be a distance function denoting the distance between
any two points $x,\,y \in [0,1]^d$. For simplicity we denote the
distances by $d_{ij}= d(x_i,x_j)$. In the simplest case we assume
the distance function to be Euclidean. However, we also consider
geodesic distances to exploit the underlying manifold structure. The
geodesic distance is defined as the shortest distance on the
manifold. The {\em Geodesic Learning} algorithm, a subroutine in
Isomap~\cite{Tenenbaum,Tenenbaum2} can be used to efficiently and
consistently estimate the geodesic distances. In addition by means
of selective weighting of different coordinates note that the
distance function could also account for pronounced changes in local
dimensionality. This can be accomplished for instance through
Mahalanobis distances or as a by product of local linear
embedding~\cite{Roweis}. However, we skip these details here and
assume that a suitable distance metric is chosen.



Once a distance function is defined our next step is to form a $K$ nearest neighbor graph
(K-NNG) or alternatively an $\epsilon$ neighbor graph ($\epsilon$-NG).
K-NNG is formed by connecting each $x_i$ to the $K$ closest points
$\{x_{i_1},\cdots,x_{i_K}\}$ in $S-\{x_i\}$. We then sort the $K$ nearest distances for each $x_i$ in increasing order $d_{i,i_1}\leq
\cdots\leq d_{i,i_K}$ and denote $R_S(x_i)=d_{i,i_K}$, that is,
the distance from $x_i$ to its $K$-th nearest neighbor. We construct $\epsilon$-NG where $x_i$ and $x_j$
are connected if and only if $d_{ij}\leq \epsilon$. In this case we define
$N_S(x_i)$ as the degree of point $x_i$ in the $\epsilon$-NG. 

For the simple case when the anomalous density is an arbitrary
mixture of nominal and uniform density\footnote{\tiny When the
mixing density is not uniform but, say $f_1$, the score functions
must be modified to $\hat p_K(\eta) = \frac{1}{n}\sum_{i=1}^n
\mathbb{I}{\scriptstyle\left\{\frac{1}{R_S(\eta)f_1(\eta)}\leq\frac{
1}{R_S(x_i)f_1(x_i)}\right\}}$ and $\hat p_{\epsilon} (\eta) =
\frac{1}{n}\sum_{i=1}^n
\mathbb{I}{\scriptstyle\left\{\frac{N_S(\eta)}{f_1(\eta)}\geq\frac{
N_S(x_i)}{f_1(x_i)}\right\}}$ for the two graphs K-NNG and
$\epsilon$-NNG respectively.} we consider the following two score
functions associated with the two graphs K-NNG and $\epsilon$-NNG
respectively.  The score functions map the test data $\eta$ to the
interval $[0,\,1]$.
\begin{eqnarray}
\text{K-LPE:\quad}\hat{p}_K(\eta) = \frac{1}{n}\sum_{i=1}^n
\mathbb{I}_{\{R_S(\eta)\leq R_S(x_i)\}}\label{eq:LPE1}\\
\text{$\epsilon$-LPE:\quad}\hat{p}_\epsilon(\eta) =
\frac{1}{n}\sum_{i=1}^n \mathbb{I}_{\{N_S(\eta)\geq
N_S(x_i)\}}\label{eq:LPE2}
\end{eqnarray}
where $\mathbb{I}_{\{\cdot\}}$ is the indicator function.

Finally, given a pre-defined significance level $\alpha$ (e.g.,
$0.05$), we declare $\eta$ to be anomalous if
$\hat{p_{K}}(\eta),\,\hat{p_{\epsilon}}(\eta)\leq \alpha$. We call this algorithm {\em Localized
p-value Estimation} (LPE) algorithm. This choice is motivated by its close connection to multivariate p-values(see Section \ref{sec:analysis}).

The score function K-LPE (or $\epsilon$-LPE) measures the relative
concentration of point $\eta$ compared to the training set. Section
\ref{sec:analysis} establishes that the scores for nominally
generated data is asymptotically uniformly distributed in $[0,\,1]$.
Scores for  anomalous data are clustered around $0$. Hence when scores below level
$\alpha$ are declared as anomalous the false alarm error is smaller than $\alpha$ asymptotically (since the
integral of a uniform distribution from $0$ to $\alpha$ is
$\alpha$).




\setlength{\unitlength}{1\textwidth}

\begin{figure}[!htb]
\centering
\begin{picture}(1,.35)
\put(-.04,0){\includegraphics[scale = 0.49]{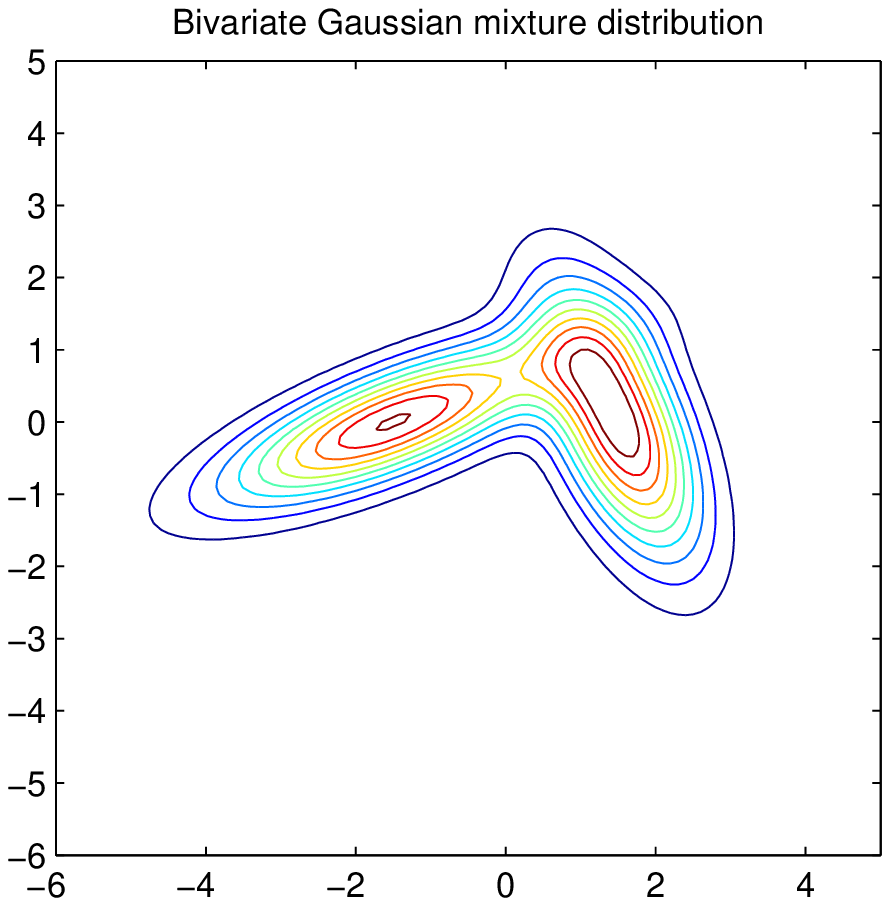}}
\put(.31,0){\includegraphics[scale = 0.49]{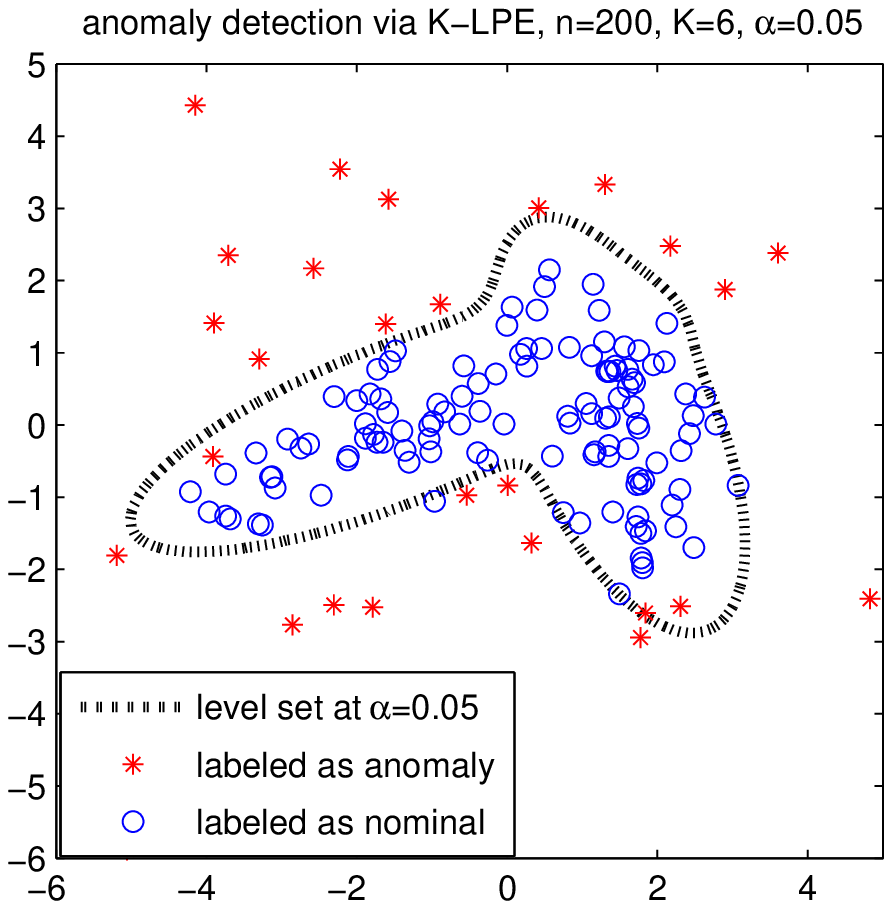}}
\put(.66,0){\includegraphics[scale = 0.49]{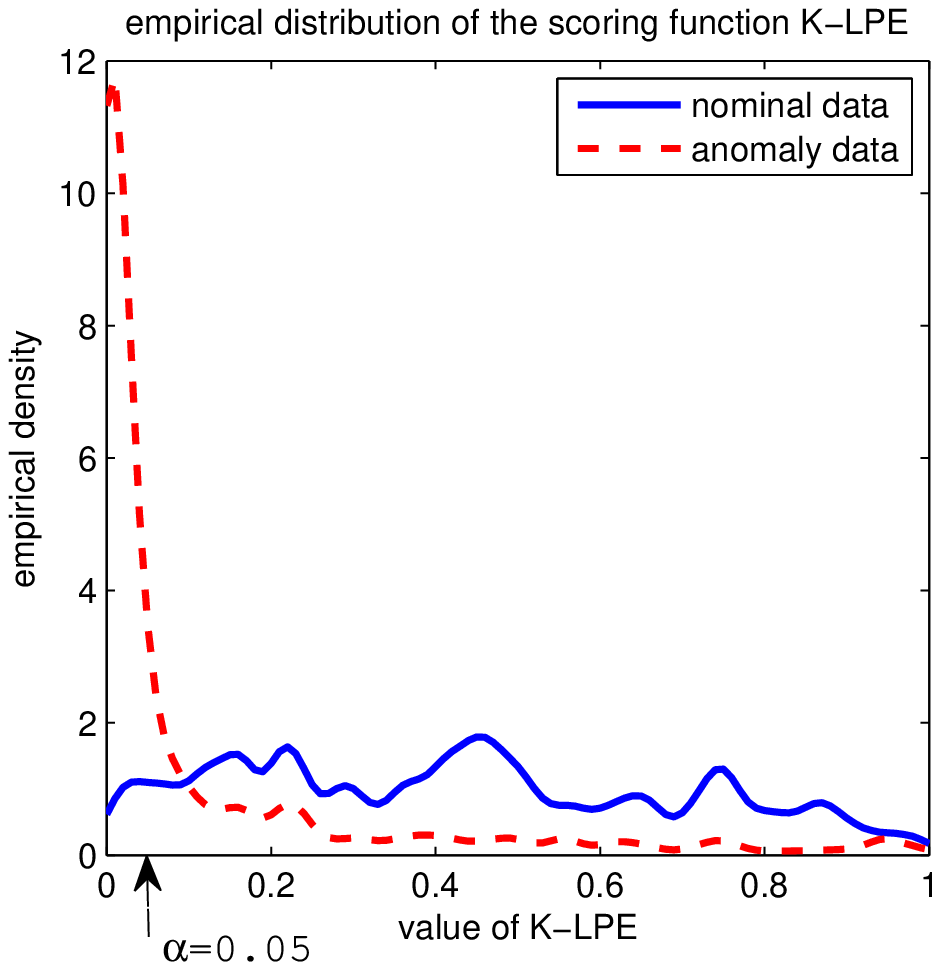}}
\end{picture}
\caption{\small \sl {\bf Left}: Level sets of the nominal bivariate
Gaussian mixture distribution used to illustrate the K-LPE
algorithm. {\bf Middle}: Results of K-LPE with $K=6$ and Euclidean distance metric for $m=150$ test points drawn from a equal mixture of 2D uniform and the (nominal) bivariate distributions. Scores for the test points are based on $200$ nominal training samples. Scores falling below a threshold level $0.05$ are declared as anomalies. The dotted contour corresponds
to the exact bivariate Gaussian density level set at level $\alpha=0.05$. {\bf Right}: The
empirical distribution of the test point scores associated with the bivariate Gaussian appear to be uniform while scores for the test points drawn from 2D uniform distribution cluster around zero.}\label{fig:illustration}
\end{figure}

Figure \ref{fig:illustration} illustrates the use of K-LPE algorithm
for anomaly detection when the nominal data is a 2D Gaussian
mixture. The middle panel of figure \ref{fig:illustration} shows the
detection results based on K-LPE are consistent with the theoretical
contour for significance level $\alpha = 0.05$. The right panel of
figure \ref{fig:illustration} shows the empirical distribution
(derived from the kernel density estimation) of the score function
K-LPE for the nominal (solid blue) and the anomaly (dashed red)
data. We can see that the curve for the nominal data is
approximately uniform in the interval $[0,\,1]$ and the curve for
the anomaly data has a peak at $0$. Therefore choosing the threshold
$\alpha=0.05$ will approximately control the Type I error within
$0.05$ and minimize the Type II error. We also take note of the inherent robustness of our algorithm. As seen from the figure (right) small changes in $\alpha$ lead to small changes in actual false alarm and miss levels.

To summarize the above discussion, our LPE algorithm has three
steps:

\noindent
{\bf (1) Inputs:} Significance level $\alpha$, distance metric (Euclidean, geodesic, weighted etc.).\\
{\bf (2) Score computation: } Construct K-NNG (or $\epsilon$-NG) based on
  $d_{ij}$ and compute the score function K-LPE
  from Equation \ref{eq:LPE1} (or $\epsilon$-LPE from Equation \ref{eq:LPE2}).\\
{\bf (3) Make Decision: } Declare $\eta$ to be anomalous if
  and only if $\hat{p}_{K}(\eta)\leq \alpha$ (or $\hat{p}_{\epsilon}(\eta)\leq
  \alpha$).


\noindent{\bf Computational Complexity:} To compute each pairwise
distance requires O(d) operations; and O($n^2d$) operations for all
the nodes in the training set. In the worst-case computing the K-NN
graph (for small $K$) and the functions $R_S(\cdot),\,N_S(\cdot)$
requires O($n^2$) operations over all the nodes in the training
data. Finally, computing the score for each test data requires
O(nd+n) operations(given that $R_S(\cdot),\,N_S(\cdot)$ have already
been computed).

\noindent {\bf Remark:} LPE is fundamentally different from
non-parametric density estimation or level set estimation schemes
(e.g., MV-set). These approaches involve explicit estimation of
high dimensional quantities and thus hard to apply in high
dimensional problems. By computing scores for each test sample we avoid high-dimensional computation. Furthermore, as we will see in the following section the scores are estimates of multivariate p-values. These turn out to be sufficient statistics for optimal anomaly detection.

\section{Theory: Consistency of LPE}\label{sec:analysis}

A statistical framework for the anomaly detection problem is
presented in this section. We establish that anomaly detection is
equivalent to thresholding p-values for multivariate data. We will
then show that the score functions developed in the previous section
is an asymptotically consistent estimator of the p-values.
Consequently, it will follow that the strategy of declaring an
anomaly when a test sample has a low score is asymptotically
optimal.

Assume that the data belongs to the d-dimensional unit cube
$[0,\,1]^d$ and the nominal data is sampled from a multivariate
density $f_0(x)$ supported on the d-dimensional unit cube
$[0,\,1]^d$. Anomaly detection can be formulated as a composite
hypothesis testing problem. Suppose test data, $\eta$ comes from a
mixture distribution, namely, $f(\eta) = (1-\pi) f_0(\eta) + \pi
f_1(\eta)$ where $f_1(\eta)$ is a mixing density supported on
$[0,1]^d$. Anomaly detection involves testing the nominal hypotheses
$H_0: \pi = 0$ versus the alternative (anomaly) $H_1 : \pi > 0$. The
goal is to maximize the detection power subject to false alarm level
$\alpha$, namely, $\mathcal{P}(\mbox{declare } H_1 \mid H_0) \leq
\alpha$.

\begin{defn}
Let $\mathcal{P}_0$ be the nominal probability
measure and $f_1(\cdot)$ be $\mathcal{P}_0$ measurable.
Suppose the likelihood ratio $f_1(x)/f_0(x)$ does not have \emph{non-zero} flat spots on any open ball in $[0,\,1]^d$.
Define the p-value of a data point $\eta$ as
$$p(\eta)=\mathcal{P}_0\left(x:\frac{f_1(x)}{f_0(x)}\geq\frac{f_1(\eta)}{f_0(\eta)}\right)$$
\end{defn}
Note that the definition naturally accounts for singularities which may arise if the support of $f_0(\cdot)$ is a lower dimensional manifold. In this case we encounter $f_1(\eta)>0,\,f_0(\eta)=0$ and the p-value $p(\eta)=0$. Here anomaly is
always declared(low score).

The above formula can be thought of as a mapping of $\eta
\rightarrow [0,\,1]$. Furthermore, the distribution of $p(\eta)$
under $H_0$ is uniform on $[0,\,1]$. However, as noted in the
introduction there are other such transformations. To build
intuition about the above transformation and its utility consider
the following example.  When the mixing density is uniform, namely,
$f_1(\eta) = U(\eta)$ where $U(\eta)$ is uniform over $[0,1]^d$,
note that $\Omega_{\alpha}=\{\eta \mid p(\eta)\geq \alpha\}$ is a
density level set at level $\alpha$. It is well known (see
\cite{Scott1}) that such a density level set is equivalent to a
minimum volume set of level $\alpha$. The minimum volume set at
level $\alpha$ is known to be the uniformly most powerful decision
region for testing $H_0:\pi=0$ versus the alternative $H_1: \pi>0$
(see \cite{Hero1,Scott1}). The generalization to arbitrary $f_1$
is described next. 
\begin{thm}
The uniformly most powerful test for testing $H_0: \pi = 0$ versus
the alternative (anomaly) $H_1 : \pi > 0$ at a prescribed level
$\alpha$ of significance $\mathcal{P}(\mbox{declare } H_1 \mid H_0)
\leq \alpha$ is:
$$
\phi(\eta) = \left \{ \begin{array}{cc} H_1, \,\, p(\eta) \leq
\alpha \\ H_0, \,\, \mbox{otherwise} \end{array} \right .
$$
\end{thm}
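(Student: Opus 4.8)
The plan is to reduce the composite testing problem to a family of simple-versus-simple problems and to invoke the Neyman--Pearson lemma, exploiting the fact that the resulting optimal rejection region does not depend on the unknown mixing weight $\pi$. First I would introduce the likelihood ratio statistic $L(\eta) = f_1(\eta)/f_0(\eta)$ and observe that, by the definition of the p-value, $p(\eta) = \mathcal{P}_0(L(x) \geq L(\eta))$ is a nonincreasing function of $L(\eta)$. Consequently the rejection region of $\phi$ can be rewritten as a single upper level set of the likelihood ratio, $\{\eta : p(\eta) \leq \alpha\} = \{\eta : L(\eta) \geq \gamma_\alpha\}$, where $\gamma_\alpha = \inf\{t : \mathcal{P}_0(L \geq t) \leq \alpha\}$.

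The key algebraic observation is that for any fixed $\pi > 0$ the alternative density $f_\pi = (1-\pi)f_0 + \pi f_1$ satisfies $f_\pi(\eta)/f_0(\eta) = (1-\pi) + \pi L(\eta)$, which is a strictly increasing affine function of $L(\eta)$. Hence the Neyman--Pearson most powerful test of $f_0$ against the simple alternative $f_\pi$, which rejects when $f_\pi/f_0$ exceeds a threshold $\tau_\pi$, rejects on a set of the form $\{L \geq \gamma\}$; matching the false-alarm constraint $\mathcal{P}_0(L \geq \gamma) = \alpha$ forces $\gamma = \gamma_\alpha$ independently of $\pi$. I would then verify the size constraint using the fact, stated above, that $p(\eta)$ is uniform on $[0,1]$ under $H_0$, so that $\mathcal{P}_0(p(\eta) \leq \alpha) = \alpha$ and $\phi$ has exact level $\alpha$, with $\{L \geq \gamma_\alpha\}$ equal to the Neyman--Pearson rejection region for every $\pi > 0$.

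Putting these together: for each fixed $\pi > 0$, $\phi$ coincides with the most powerful level-$\alpha$ test of $f_0$ against $f_\pi$, so no competing level-$\alpha$ test can have larger power at that $\pi$. Because the very same test is simultaneously optimal for every value of $\pi > 0$, it is uniformly most powerful for the composite alternative $H_1 : \pi > 0$, which is the claim.

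I expect the main obstacle to be the careful handling of the boundary $\{L = \gamma_\alpha\}$ and the assertion that level $\alpha$ is attained exactly. This is precisely where the hypothesis that $f_1/f_0$ has no nonzero flat spots is essential: it guarantees that $L$ has a continuous distribution under $\mathcal{P}_0$ (the level sets $\{x : L(x) = c\}$ with $c \neq 0$ carry zero nominal measure), so that $p(\eta)$ is genuinely uniform on $[0,1]$, the threshold $\gamma_\alpha$ is achieved without ties, and no randomization is required in the Neyman--Pearson step. The singular case $f_1(\eta) > 0$, $f_0(\eta) = 0$, where $p(\eta) = 0$, must also be folded in, but such $\eta$ always lie in the rejection region and carry no nominal mass, so they do not affect the size calculation.
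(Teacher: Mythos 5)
Your argument is correct and proves the theorem; at its core it rests on the same two pillars as the paper's sketch (thresholding $p$ is equivalent to thresholding the likelihood ratio $L=f_1/f_0$, and $p(X)\sim U[0,1]$ under $H_0$ gives exact level $\alpha$), but you organize the second half differently. The paper pushes the data forward to the p-value scale and asserts that under any alternative $\pi>0$ the transformed variable $p(X)$ has a monotonically decreasing density $g$ on $[0,1]$, from which optimality of rejecting small p-values is read off; that intermediate distributional claim is left unproved and, to yield UMP among \emph{all} tests rather than just tests based on $p(\eta)$, still implicitly requires a Neyman--Pearson step. You instead apply Neyman--Pearson directly in the original sample space to each simple alternative $f_\pi=(1-\pi)f_0+\pi f_1$, and use the key observation that $f_\pi/f_0=(1-\pi)+\pi L$ is increasing in $L$, so the most powerful rejection region $\{L\geq\gamma_\alpha\}$ is the same for every $\pi>0$. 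This decomposition is more self-contained: it avoids having to establish monotonicity of $g$, it makes explicit why optimality holds uniformly in $\pi$, and your discussion of the no-flat-spots hypothesis (continuity of the law of $L$ under $\mathcal{P}_0$, hence no randomization and exact size) and of the singular set $\{f_0=0,\ f_1>0\}$ supplies precisely the measure-theoretic care the paper waves at in its first sentence. The paper's route, for its part, has the virtue of exhibiting $p(\eta)$ as a one-dimensional sufficient reduction whose null and alternative laws are described explicitly, which is the viewpoint the rest of the paper builds on.
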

\begin{proof}
We provide the main idea for the proof. First, measure theoretic arguments are used to establish $p(X)$ as a random variable over $[0,\,1]$ under both nominal and anomalous distributions. 
Next when $X \stackrel{d}{\sim} f_0$, i.e., distributed with nominal
density it follows that the random variable $p(X)\stackrel{d}{\sim} U[0,1]$. When $X \stackrel{d}{\sim} f = (1-\pi)f_0 + \pi f_1$ with $\pi>0$ the random variable, $p(X)\stackrel{d}{\sim} g$ where
$g(\cdot)$ is a monotonically decreasing PDF supported on $[0,\,1]$. Consequently, the uniformly most powerful test for a significance level $\alpha$ is to declare p-values smaller than $\alpha$ as anomalies.
\end{proof}
Next we derive the relationship between the p-values and our score
function. 
By definition, $R_S(\eta)$ and
$R_S(x_i)$ are correlated because the neighborhood of $\eta$ and
$x_i$ might overlap. We modify our algorithm to simplify our analysis.
We assume $n$ is odd (say) and can be written as $n = 2m+1$. We divide training
set $S$ into two parts:
$$S = S_1\cap S_2=\{x_0,
x_1,\cdots,x_{m}\}\cap\{x_{m+1},\cdots,x_{2m}\}$$

We modify $\epsilon$-LPE to $\hat
p_\epsilon(\eta)=\frac{1}{m}\sum_{x_i\in S_1}
\mathbb{I}_{\{N_{S_2}(\eta)\geq
 N_{S_1}(x_i)\}}$ (or $K$-LPE
to $\hat p_K(\eta)=\frac{1}{m}\sum_{x_i\in S_1}
\mathbb{I}_{\{R_{S_2}(\eta)\leq
 R_{S_1}(x_i)\}}$). Now $R_{S_2}(\eta)$ and $ R_{S_1}(x_i)$ are independent.

Furthermore, we assume $f_0(\cdot)$ satisfies the following two
smoothness conditions:
\begin{enumerate}
  \item the Hessian matrix $H(x)$ of $f_0(x)$ is always dominated by
  a  matrix with largest eigenvalue $\lambda_M$, i.e.,
      $\exists M ~\text{s.t.}~ H(x)\preceq M~\forall x\text{ and }\lambda_{\max}(M)\leq \lambda_M$
  \item In the support of $f_0(\cdot)$, its value is always lower bounded
  by some $\beta>0$.
\end{enumerate}

We have the following theorem.
\begin{thm}\label{thm2}
Consider the setup above with the training data $\{x_i\}_{i=1}^n$ generated i.i.d. from $f_0(x)$. Let $\eta \in [0,1]^d$ be an arbitrary test sample. It follows that for a suitable choice $K$ and under the
above smoothness conditions,
$$
|\hat p_K(\eta) - p(\eta)| \stackrel{n \rightarrow
\infty}{\longrightarrow} 0 \,\,\,\mbox{almost surely, }\,\forall
\eta\in[0,1]^d
$$
\end{thm}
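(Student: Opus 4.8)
The plan is to reinterpret the score as a comparison of $K$-nearest-neighbor density estimates and then reduce the claim to the strong law of large numbers. Since the theorem concerns $\hat p_K$, I specialize to the uniform mixing density, so that the likelihood-ratio ordering $f_1(x)/f_0(x)\geq f_1(\eta)/f_0(\eta)$ collapses to $f_0(x)\leq f_0(\eta)$ and hence $p(\eta)=\mathcal{P}_0(x:f_0(x)\leq f_0(\eta))$. Writing $c_d$ for the volume of the unit Euclidean ball in $\mathbb{R}^d$, I introduce the $K$-NN density estimates $\hat f_2(\eta)=K/(m\,c_d\,R_{S_2}(\eta)^d)$ and $\hat f_1(x_i)=K/(m\,c_d\,R_{S_1}(x_i)^d)$. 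Because $r\mapsto r^d$ is increasing, the event $\{R_{S_2}(\eta)\leq R_{S_1}(x_i)\}$ coincides with $\{\hat f_2(\eta)\geq\hat f_1(x_i)\}$, so
$$\hat p_K(\eta)=\frac{1}{m}\sum_{x_i\in S_1}\mathbb{I}_{\{\hat f_2(\eta)\geq\hat f_1(x_i)\}}.$$
The natural target is the oracle version $\tilde p(\eta)=\frac{1}{m}\sum_{x_i\in S_1}\mathbb{I}_{\{f_0(\eta)\geq f_0(x_i)\}}$, and I bound $|\hat p_K-p|\leq|\hat p_K-\tilde p|+|\tilde p-p|$.

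The second term is the easy one: for fixed $\eta$, the summands $\mathbb{I}_{\{f_0(x_i)\leq f_0(\eta)\}}$ are i.i.d.\ bounded functions of the independent points $x_i\in S_1$ with mean exactly $\mathcal{P}_0(f_0(x)\leq f_0(\eta))=p(\eta)$, so the strong law gives $\tilde p(\eta)\to p(\eta)$ almost surely. The first term requires a quantitative consistency statement for the $K$-NN density estimate. Fixing a point $z$ and radius $r$, the number of sample points inside $B(z,r)$ is Binomial with mean $m\,\mu(B(z,r))$, where $\mu(B(z,r))=\int_{B(z,r)}f_0$. The Hessian bound lets me Taylor-expand $f_0$ about $z$; the first-order term integrates to zero by the symmetry of the ball, leaving $\mu(B(z,r))=c_d r^d f_0(z)\bigl(1+O(\lambda_M r^2/f_0(z))\bigr)$, and the lower bound $f_0\geq\beta$ keeps the relative bias uniformly controlled. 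A Bernstein/Chernoff tail bound on the Binomial then shows $R_{S_2}(\eta)$ and each $R_{S_1}(x_i)$ concentrate so that $\hat f_2(\eta)$ and $\hat f_1(x_i)$ lie within $\varepsilon$ of $f_0(\eta)$ and $f_0(x_i)$ with probability $1-e^{-\Omega(K)}$. Choosing $K=K(n)$ with $K\to\infty$, $K/m\to 0$ (so the bias $(K/m)^{2/d}\to 0$) and $K/\log n\to\infty$ makes a union bound over the $m$ points summable in $n$, so Borel--Cantelli converts the estimation error to an almost-sure, uniform-over-$i$ statement (the leave-one-out dependence of $R_{S_1}(x_i)$ on the rest of $S_1$ is handled by conditioning on $x_i$).

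The main obstacle is that the indicator in $\hat p_K$ is discontinuous, so an arbitrarily small estimation error can flip a summand whenever $\hat f_1(x_i)$ lands within the error tolerance of $\hat f_2(\eta)$; this is exactly where the no-flat-spot hypothesis on $f_1/f_0$ enters. Define the shell $B_\delta=\{x:|f_0(x)-f_0(\eta)|\leq\delta\}$. The no-flat-spot condition forces the level set $\{f_0=f_0(\eta)\}$ to have $\mathcal{P}_0$-measure zero, so $\mathcal{P}_0(B_\delta)\to 0$ as $\delta\to 0$. For $x_i\notin B_\delta$, once both estimation errors are below $\delta/2$ the two indicators agree; for $x_i\in B_\delta$ they may disagree, but the fraction of such points is at most $\frac{1}{m}\sum_{x_i\in S_1}\mathbb{I}_{\{x_i\in B_\delta\}}\to\mathcal{P}_0(B_\delta)$ by the law of large numbers. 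Hence $\limsup_n|\hat p_K(\eta)-\tilde p(\eta)|\leq\mathcal{P}_0(B_\delta)$ almost surely, and letting $\delta\to 0$ along a countable sequence drives this to zero.

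Combining the two pieces yields $|\hat p_K(\eta)-p(\eta)|\to 0$ almost surely, as claimed. One caveat to dispatch separately is the boundary effect near $\partial[0,1]^d$, where balls are truncated by the cube and the symmetry argument for the bias expansion fails; this can be absorbed by a boundary-corrected normalization or by noting that it perturbs only a vanishing fraction of the mass, and it does not affect the structure of the argument.
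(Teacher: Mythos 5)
Your proof is essentially correct, but it follows a genuinely different route from the paper's. The paper decomposes the error as $|\hat p_K(\eta)-\mathbb{E}_{S_1}[\hat p_K(\eta)]|+|\mathbb{E}_{S_1}[\hat p_K(\eta)]-p(\eta)|$: the bias term is handled by Chernoff bounds on the binomial counts plus a Taylor expansion (Lemmas \ref{lem:mean}--\ref{lem:meanK}), and the fluctuation term is handled by McDiarmid's bounded-differences inequality (Lemma \ref{lem:cond}), which is needed precisely because the summands $\mathbb{I}_{\{R_{S_2}(\eta)\leq R_{S_1}(x_i)\}}$ are dependent; the bounded-differences constant $K\gamma_d/m$ comes from the geometric fact that moving one sample point changes at most $K\gamma_d$ of the values $R_{S_1}(x_i)$. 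You instead compare $\hat p_K$ to the oracle empirical p-value $\tilde p(\eta)=\frac1m\sum_i\mathbb{I}_{\{f_0(x_i)\leq f_0(\eta)\}}$, whose summands \emph{are} i.i.d., so the SLLN applies directly, and you control $|\hat p_K-\tilde p|$ by a union bound forcing all $m{+}1$ $K$-NN density estimates to be simultaneously accurate, plus an explicit ``shell'' argument for the near-ties where the indicator can flip. This buys you two things: you never have to confront the dependence among the indicators (no McDiarmid, no covering constant $\gamma_d$), and your choice of $K$ ($K\to\infty$, $K/m\to0$, $K/\log n\to\infty$) is cleaner than Lemma \ref{lem:meanK}'s, which awkwardly depends on the unknown $f_0(\eta)$. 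The costs are the $K\gg\log n$ requirement (harmless, since $m^{2/5}\gg\log m$) and the need to make explicit that $\mathcal{P}_0(B_\delta)\to0$ as $\delta\to0$, i.e.\ that the level set $\{x:f_0(x)=f_0(\eta)\}$ has $\mathcal{P}_0$-measure zero. Two caveats on that last point: the paper's stated no-flat-spot hypothesis (no flat spots ``on any open ball'') does not literally imply the level set has measure zero, since a positive-measure set need not contain a ball — but the paper's own sandwich $l_m(\eta)\leq\mathbb{E}[\hat p_K]\leq u_m(\eta)$ silently needs the identical continuity-of-measure fact for $l_m,u_m\to p(\eta)$, so you have made explicit a step the paper glosses over rather than introduced a new gap. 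Your acknowledged hand-wave at the boundary of $[0,1]^d$ is likewise no worse than the paper's, which ignores boundary truncation of $B_\epsilon$ entirely.
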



For simplicity, we limit ourselves to the case when $f_1$ is uniform. The proof of Theorem~\ref{thm2} consists of two steps:
\begin{itemize}
  \item We show that the expectation
$\mathbb{E}_{S_1}\left[\hat p_\epsilon(\eta)\right]\stackrel{n
\rightarrow \infty}{\longrightarrow}p(\eta)$ (Lemma \ref{lem:mean}).
This result is then extended to K-LPE (i.e.
$\mathbb{E}_{S_1}\left[\hat p_K(\eta)\right]\stackrel{n \rightarrow
\infty}{\longrightarrow}p(\eta)$) in
Lemma \ref{lem:meanK}.
  \item Next we show that $\hat p_K(\eta)\stackrel{n
\rightarrow \infty}{\longrightarrow}\mathbb{E}_{S_1}\left[\hat
p_K(\eta)\right]$ via concentration inequality (Lemma
\ref{lem:cond}).
\end{itemize}

\begin{lem}[$\epsilon$-LPE]\label{lem:mean}
By picking $\epsilon= m^{-\frac{3}{5d}}\sqrt{\frac{d}{2\pi e}}$,
with probability at least $1-e^{-\beta m^{1/15}/2}$,
\begin{eqnarray}
l_m(\eta)\leq\mathbb{E}_{S_1}\left[\hat p_\epsilon(\eta)\right]\leq
u_m(\eta)
\end{eqnarray}
where
\begin{eqnarray*}
l_m(\eta)=\mathcal{P}_0{\left\{x: \left(f_0(\eta)-\Delta_1\right)\left(1-\Delta_2\right)\geq
\left(f_0(x)+\Delta_1\right)\left(1+\Delta_2\right)\right\}}-e^{-\beta
m^{1/15}/2}\\
u_m(\eta)=\mathcal{P}_0{\left\{x:\left(f_0(\eta)+\Delta_1\right)\left(1+\Delta_2\right)\geq
\left(f_0(x)-\Delta_1\right)\left(1-\Delta_2\right)\right\}}+e^{-\beta
m^{1/15}/2}
\end{eqnarray*}
$\Delta_1 = \lambda_M m^{-6/5d}/(2\pi e (d+2))$ and $\Delta_2 =
2m^{-1/6}$.

\end{lem}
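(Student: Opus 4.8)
The plan is to read the normalized degree $N_S(x)/(m V_\epsilon)$, where $V_\epsilon$ is the volume of a $d$-dimensional ball of radius $\epsilon$, as a uniform-kernel density estimate of $f_0$, and to control its bias and its fluctuation separately. Fixing the test point $\eta$, I would first note that $N_{S_2}(\eta)\sim\mathrm{Bin}(m,P_\eta)$ with $P_\eta=\mathcal{P}_0(B(\eta,\epsilon))$ the nominal mass of the $\epsilon$-ball about $\eta$, while for a generic $x\in S_1$ the count $N_{S_1}(x)$ is Binomial with mean $(m-1)P_x$. For the bias I would Taylor-expand $P_\eta=\int_{B(\eta,\epsilon)}f_0$: the first-order term integrates to zero by the symmetry of the ball, and the second-order remainder is controlled by the dominating Hessian via $\int_{B(0,\epsilon)}u_iu_j\,du=\frac{V_\epsilon\epsilon^2}{d+2}\delta_{ij}$ together with $\lambda_{\max}(M)\le\lambda_M$. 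This yields $|P_\eta/V_\epsilon-f_0(\eta)|\le\Delta_1$ with $\Delta_1$ of order $\lambda_M\epsilon^2$; substituting $\epsilon=m^{-3/5d}\sqrt{d/(2\pi e)}$ produces the stated $\Delta_1=\lambda_M m^{-6/5d}/(2\pi e(d+2))$ (up to the explicit dimensional constant). The same estimate holds at every $x$ in the support.

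The second step is a multiplicative Chernoff bound on the Binomial counts. Since $f_0\ge\beta$ on its support and $V_\epsilon\asymp m^{-3/5}$ (up to $d$-dependent constants, by Stirling's approximation of the ball volume), the mean count obeys $mP_\eta\gtrsim\beta m^{2/5}$. Choosing the relative deviation $\Delta_2=2m^{-1/6}$ gives a failure exponent $\Delta_2^2\cdot mP_\eta\asymp\beta m^{2/5-1/3}=\beta m^{1/15}$, which is precisely the source of the probability $1-e^{-\beta m^{1/15}/2}$ and the reason the exponents $3/5d$ and $1/6$ are chosen as they are. Hence, off an event of that probability, $(1-\Delta_2)P_\eta\le N_{S_2}(\eta)/m\le(1+\Delta_2)P_\eta$, and the same two-sided bound holds for each $N_{S_1}(x)/m$.

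The third step assembles the sandwich. Conditioning on the high-probability event that $N_{S_2}(\eta)$ concentrates, I would rewrite $\mathbb{E}_{S_1}[\hat p_\epsilon(\eta)]=\mathbb{E}_{x}\,\mathcal{P}_{S_1}\!\left(N_{S_1}(x)\le N_{S_2}(\eta)\right)$ by exchangeability of the points of $S_1$. The inner probability is forced to $1$, up to the Chernoff failure, whenever $(f_0(x)+\Delta_1)(1+\Delta_2)\le(f_0(\eta)-\Delta_1)(1-\Delta_2)$, because then the upper concentration bound for $N_{S_1}(x)$ sits below the lower concentration bound for $N_{S_2}(\eta)$. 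Taking expectation over $x\sim f_0$ converts the indicator of this event into the measure $\mathcal{P}_0(\cdot)$ appearing in $l_m(\eta)$, and the subtracted $e^{-\beta m^{1/15}/2}$ absorbs the inner failure probability. The mirror-image argument (the indicator vanishes once $f_0(x)$ exceeds $f_0(\eta)$ by the same margin) gives the upper bound $u_m(\eta)$.

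I expect the main obstacle to be the bias--variance calibration of the third step rather than any single estimate: the bias $\Delta_1\sim\lambda_M\epsilon^2$ shrinks only as $\epsilon\to0$, whereas the Chernoff concentration demands $mV_\epsilon\to\infty$, i.e.\ $\epsilon$ not too small, and the choice $\epsilon=m^{-3/5d}\sqrt{d/(2\pi e)}$ is what simultaneously drives $\Delta_1,\Delta_2\to0$ while keeping the failure exponent growing like $m^{1/15}$. Equally delicate is the bookkeeping of the two randomness sources, since $\mathbb{E}_{S_1}$ and the ``with probability at least $1-e^{-\beta m^{1/15}/2}$'' refer to $S_1$ and $S_2$ respectively; the two Chernoff applications must be arranged so that one supplies the outer probability and the other the additive slack inside $l_m$ and $u_m$. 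The non-flat-spot hypothesis on $f_1/f_0$ is not used for this sandwich; it becomes essential only afterwards, when $\Delta_1,\Delta_2\to0$ is used to collapse $l_m,u_m$ onto $p(\eta)$.
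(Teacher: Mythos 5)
Your proposal follows essentially the same route as the paper's proof: the exchangeability reduction of $\mathbb{E}_{S_1}[\hat p_\epsilon(\eta)]$ to a single comparison probability, the Binomial/Chernoff concentration of the degrees around $mq(\cdot)$, the second-order Taylor bound $|q(x)/\mathrm{Vol}(B_\epsilon)-f_0(x)|\lesssim\lambda_M\epsilon^2/(d+2)$, and the same calibration of $\epsilon$, $\Delta_1$, $\Delta_2$ producing the $e^{-\beta m^{1/15}/2}$ failure terms. Your accounting of which Chernoff application supplies the outer probability (for $N_{S_2}(\eta)$) versus the additive slack inside $l_m,u_m$ (for $N_{S_1}(x)$) matches the paper's argument, so no substantive difference to report.
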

\begin{proof} We only prove the lower bound since the upper bound follows along similar lines.
By interchanging the expectation with the summation,
\begin{eqnarray*}
\mathbb{E}_{S_1}\left[\hat
p_\epsilon(\eta)\right]&=&\mathbb{E}_{S_1}\left[\frac{1}{m}\sum_{x_i\in
S_1} \mathbb{I}_{\{N_{S_2}(\eta)\geq
 N_{S_1}(x_i)\}}\right]\\
&=&\frac{1}{m}\sum_{x_i\in
S_1}\mathbb{E}_{x_i}\mathbb{E}_{S_1\setminus x_i}\left[
\mathbb{I}_{\{N_{S_2}(\eta)\geq
 N_{S_1}(x_i)\}}\right]\\
 &=&
 \mathbb{E}_{x_1}[\mathcal{P}_{S_1\setminus x_1}(N_{S_2}(\eta)\geq
 N_{S_1}(x_1))]
\end{eqnarray*}
where the last inequality follows from the symmetric structure of
$\{x_0,x_1,\cdots,x_m\}$.

Clearly the objective of the proof is to show
$\mathcal{P}_{S_1\setminus x_1}(N_{S_2}(\eta)\geq
 N_{S_1}(x_1))\stackrel{n
\rightarrow \infty}{\longrightarrow}
\mathbb{I}_{\left\{f_0(\eta)\geq
 f_0(x_1)\right\}}$. Skipping technical details, this can be
 accomplished in two steps. (1) Note that $N_S(x_1)$ is a
 binomial random variable with success probability
 $q(x_1):=\int_{B_\epsilon}f_0(x_1+t)\text{d}t$. This relates $\mathcal{P}_{S_1\setminus x_1}(N_{S_2}(\eta)\geq
 N_{S_1}(x_1))$ to $\mathbb{I}_{\left\{q(\eta)\geq
 q(x_1)\right\}}$. (2) We relate $\mathbb{I}_{\left\{q(\eta)\geq
 q(x_1)\right\}}$ to $\mathbb{I}_{\left\{f_0(\eta)\geq
 f_0(x_1)\right\}}$ based on the function smoothness condition. The
 details of these two steps are shown in the below.

Note that $N_{S_1}(x_1)\sim \text{Binom}({m},q(x_1))$. By Chernoff
bound of binomial distribution, we have
\begin{eqnarray*}
\mathcal{P}_{S_1\setminus x_1}(N_{S_1}(x_1)-m q(x_1)\geq \delta)\leq
e^{-\frac{\delta^2}{2m q(x_1)}}
\end{eqnarray*}
that is, $N_{S_1}(x_1)$ is concentrated around $mq(x_1)$.  This
implies,
\begin{eqnarray}\label{eq:Pr_In} \mathcal{P}_{S_1\setminus x_1}(N_{S_2}(\eta)\geq
 N_{S_1}(x_1))\geq \mathbb{I}_{\left\{N_{S_2}(\eta)\geq m q(x_1)+\delta_{x_1}\right\}}
 -e^{-\frac{\delta_{x_1}^2}{2m
 q(x_1)}}\end{eqnarray}

We choose $\delta_{x_1}=q(x_1)m^\gamma(\gamma \text{ will be
specified later})$ and reformulate equation~(\ref{eq:Pr_In}) as
\begin{eqnarray}\label{eq:Pr_In3}
\mathcal{P}_{S_1\setminus x_1}(N_{S_2}(\eta)\geq N_{S_1}(x_1))\geq
\mathbb{I}_{\left\{\frac{N_{S_2}(\eta)}{m\text{Vol}(B_\epsilon)}\geq
\frac{q(x_1)}{\text{Vol}(B_\epsilon)}\left(1+\frac{2}{m^{1-\gamma}}\right)\right\}}
 -e^{-\frac{q(x_1) m^{2\gamma-1}}{2}}\end{eqnarray}

Next, we relate $q(x_1)(\text{or }
\int_{B_\epsilon}f_0(x_1+t)\text{d}t)$ to $f_0(x_1)$ via the
Taylor's expansion and the smoothness condition of $f_0$,
\begin{eqnarray} \label{eq:step2} \left|\frac{\int_{B_\epsilon}f_0(x_1+t)\text{d}t}{\text{Vol}(B_\epsilon)}-
f_0(x_1)\right| \leq
\frac{\lambda_M}{2}\cdot\frac{1}{\text{Vol}(B_\epsilon)}\int_{B_\epsilon}\|t\|^2\text{d}t
=\frac{\lambda_M \epsilon^2}{2d(d+2)}
\end{eqnarray}
and then equation~(\ref{eq:Pr_In3}) becomes
\[\mathcal{P}_{S_1\setminus x_1}(N_{S_2}(\eta)\geq N_{S_1}(x_1))\geq
\mathbb{I}_{\left\{\frac{N_{S_2}(\eta)}{m\text{Vol}(B_\epsilon)}\geq
\left(f_0(x_1)+\frac{\lambda_M
\epsilon^2}{2d(d+2)}\right)\left(1+\frac{2}{m^{1-\gamma}}\right)\right\}}
 -e^{-\frac{q(x_1) m^{2\alpha-1}}{2}}\]
%

By applying the same steps to $N_{S_2}(\eta)$ as equation
\ref{eq:Pr_In} (Chernoff bound) and equation \ref{eq:step2}
(Taylor's explansion), we have with probability at least
$1-e^{-\frac{q(\eta) m^{2\alpha-1}}{2}}$,
{\small
$$
\mathbb{E}_{x_1}[\mathcal{P}_{S_1\setminus x_1}(N_{S_2}(\eta)\geq N_{S_1}(x_1))]\geq
\mathcal{P}_{x_1}{\scriptstyle{\left\{\left(f_0(\eta)-\frac{\lambda_M
\epsilon^2}{2d(d+2)}\right)\left(1-\frac{2}{m^{1-\gamma}}\right)\geq
\left(f_0(x_1)+\frac{\lambda_M
\epsilon^2}{2d(d+2)}\right)\left(1+\frac{2}{m^{1-\gamma}}\right)\right\}}}
-e^{-\frac{q(x_1) m^{2\alpha-1}}{2}}
$$}

Finally, by choosing $\epsilon^2 = m^{-\frac{6}{5d}}\cdot
\frac{d}{2\pi e}$ and $\gamma = 5/6$, we prove the lemma.
\end{proof}

\begin{lem}[$K$-LPE]\label{lem:meanK}
By picking $K =\left(1-2m^{-1/6}
\right)m^{2/5}\left(f_0(\eta)-\Delta_1\right)$, with probability at
least $1-e^{-\beta m^{1/15}/2}$,
\begin{eqnarray}
l_m(\eta)\leq\mathbb{E}_{S_1}\left[\hat p_K(\eta)\right]\leq
u_m(\eta)
\end{eqnarray}
\end{lem}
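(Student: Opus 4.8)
The plan is to reduce the $K$-LPE analysis to the $\epsilon$-LPE analysis already carried out in Lemma~\ref{lem:mean}, exploiting the elementary duality between the $K$-nearest-neighbor radius and the degree in a ball: for any point $x$, any sample $S$, and any radius $r$, the event $\{R_S(x)\leq r\}$ is \emph{identical} to the event that at least $K$ sample points fall in the ball of radius $r$ about $x$. Writing $N_S(x;r)$ for this count, we have $\{R_S(x)\leq r\}=\{N_S(x;r)\geq K\}$, which turns every statement about the random radius $R_S$ into a statement about a binomial count $N_S(x;r)\sim\mathrm{Binom}(m,q_r(x))$ with $q_r(x)=\int_{B_r}f_0(x+t)\,\mathrm{d}t$. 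The key point is that the prescribed value $K=(1-2m^{-1/6})m^{2/5}(f_0(\eta)-\Delta_1)$ is precisely the calibration $K\approx m\,\mathrm{Vol}(B_\epsilon)\,(f_0(\eta)-\Delta_1)(1-\Delta_2)$, i.e. (up to the smoothness and concentration corrections $\Delta_1,\Delta_2$) the expected number of samples in an $\epsilon$-ball about $\eta$, where $\epsilon$ is the radius fixed in Lemma~\ref{lem:mean}. This calibration is what makes the two scores agree asymptotically.

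Concretely I would proceed as follows. First, repeat the conditioning and symmetrization step from the proof of Lemma~\ref{lem:mean} verbatim to reduce the expectation to $\mathbb{E}_{S_1}[\hat p_K(\eta)]=\mathbb{E}_{x_1}\big[\mathcal{P}_{S_1\setminus x_1}(R_{S_2}(\eta)\leq R_{S_1}(x_1))\big]$, so that the whole problem is to control the single comparison probability $\mathcal{P}_{S_1\setminus x_1}(R_{S_2}(\eta)\leq R_{S_1}(x_1))$. Second, I would insert the deterministic radius $\epsilon$ as a pivot and bound the two random radii separately: using $\{R_{S_2}(\eta)\leq\epsilon\}=\{N_{S_2}(\eta;\epsilon)\geq K\}$ together with the Chernoff bound on $N_{S_2}(\eta;\epsilon)\sim\mathrm{Binom}(m,q_\epsilon(\eta))$, the calibration $K\approx m q_\epsilon(\eta)(1-\Delta_2)$ forces $R_{S_2}(\eta)\leq\epsilon$ with probability at least $1-e^{-\beta m^{1/15}/2}$; on this event the indicator $\mathbb{I}_{\{R_{S_2}(\eta)\leq R_{S_1}(x_1)\}}$ is bounded below by $\mathbb{I}_{\{\epsilon\leq R_{S_1}(x_1)\}}=\mathbb{I}_{\{N_{S_1}(x_1;\epsilon)<K\}}$ for the lower bound (and bounded above by the reverse pivot for the upper bound). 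Third, I would finish exactly as in Lemma~\ref{lem:mean}: apply the Chernoff bound to $N_{S_1}(x_1;\epsilon)$ and the Taylor expansion~\eqref{eq:step2} of $q_\epsilon(\cdot)$ to translate $\mathbb{I}_{\{N_{S_1}(x_1;\epsilon)<K\}}$ into the density comparison $\mathbb{I}_{\{(f_0(\eta)-\Delta_1)(1-\Delta_2)\geq(f_0(x_1)+\Delta_1)(1+\Delta_2)\}}$, and then integrate over $x_1$ to recover $l_m(\eta)$. The bound $u_m(\eta)$ follows by pivoting the inequalities in the opposite direction, and all exceptional events are absorbed into the same $e^{-\beta m^{1/15}/2}$ slack already present in $l_m,u_m$.

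The main obstacle is the decoupling of the two data-dependent radii $R_{S_2}(\eta)$ and $R_{S_1}(x_1)$: unlike the count comparison in Lemma~\ref{lem:mean}, where one random quantity was bounded against a deterministic threshold, here both sides of the inequality are random, so the pivot $\epsilon$ must be chosen so that $R_{S_2}(\eta)$ concentrates \emph{from the correct side} while the comparison with $R_{S_1}(x_1)$ still retains the right margin. The delicate bookkeeping is to verify that the chosen $K$ places $K$ just inside the concentration window of $N_{S_2}(\eta;\epsilon)$ --- that the factor $(1-\Delta_2)$ with $\Delta_2=2m^{-1/6}$ matches the Chernoff deviation used for the radius concentration --- and that the ball-volume constant in $m\,\mathrm{Vol}(B_\epsilon)$ reconciles with the bare $m^{2/5}$ appearing in the statement, so that Lemma~\ref{lem:meanK} reproduces the \emph{same} $l_m,u_m$ as Lemma~\ref{lem:mean} rather than merely comparable ones. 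Once this calibration is checked, the remaining estimates are the binomial Chernoff bound and the Taylor remainder already established, so no genuinely new analytic ingredient is required.
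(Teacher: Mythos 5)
Your proposal is correct and matches the paper's own argument: the paper likewise pivots the comparison $\{R_{S_2}(\eta)\leq R_{S_1}(x_1)\}$ on the deterministic radius $\epsilon$ by noting it contains $\{N_{S_2}(\eta)\geq K\}\cap\{N_{S_1}(x_1)\leq K\}$, then applies the binomial tail bounds and the Taylor-expansion step from Lemma~\ref{lem:mean} to arrive at the same calibration $K=(1-2m^{-1/6})m^{2/5}(f_0(\eta)-\Delta_1)$. The decoupling issue and the $m\,\mathrm{Vol}(B_\epsilon)\approx m^{2/5}$ bookkeeping you flag are exactly the points the paper handles (the choice of $\epsilon$ in Lemma~\ref{lem:mean} makes $m\,\mathrm{Vol}(B_\epsilon)$ scale as $m^{2/5}$), so no new ingredient is needed.
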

\begin{proof}

The proof is very similar to the proof to Lemma \ref{lem:mean} and
we only give a brief outline here. Now the objective is to show
$\mathcal{P}_{S_1\setminus x_1}(R_{S_2}(\eta)\leq
 R_{S_1}(x_1))\stackrel{n
\rightarrow \infty}{\longrightarrow}
\mathbb{I}_{\left\{f_0(\eta)\geq
 f_0(x_1)\right\}}$.The basic idea is to use the
result of Lemma \ref{lem:mean}. To accomplish this, we note that
$\{R_{S_2}(\eta)\leq
 R_{S_1}(x_1)\}$ contains the events $\{N_{S_2}(\eta)\geq K\}\cap\{N_{S_1}(x_1)\leq
 K\}$, or equivalently
 \begin{eqnarray}
\{N_{S_2}(\eta)-q(\eta) m\geq K-q(\eta)
m\}\cap\{N_{S_1}(x_1)-q(x_1)m\leq
 K-q(x_1)m\}
 \end{eqnarray}

By the tail probability of Binomial distribution, the probability of
the above two events converges to 1 exponentially fast if $K-q(\eta)
m<0$ and $K-q(x_1)m> 0$. By using the same two-step bounding
techniques developed in the proof to Lemma \ref{lem:mean}, these two
inequalities are implied by
\begin{eqnarray*}
K-m^{2/5}\left(f_0(\eta)-\Delta_1\right)<
0\,~\text{and}\,~K-m^{2/5}\left(f_0(x_1)+\Delta_1\right)> 0
\end{eqnarray*}
Therefore if we choose $K=\left(1-2m^{-1/6}
\right)m^{2/5}\left(f_0(\eta)- \Delta_1\right)$, we have with
probability at least $1-e^{-\beta m^{-1/15}/2}$,
\begin{eqnarray*}\mathcal{P}_{S_1\setminus x_1}(R_{S_2}(\eta)\leq
R_{S_1}(x_1))\geq
\mathbb{I}_{{\left\{\left(f_0(\eta)-\Delta_1\right)\left(1-\Delta_2\right)\geq
\left(f_0(x_1)+\Delta_1\right)\left(1+\Delta_2\right)\right\}}}
-e^{-\beta m^{-1/15}/2}\end{eqnarray*}

\end{proof}

\noindent{\bf Remark:} Lemma \ref{lem:mean} and Lemma \ref{lem:meanK} were proved with specific choices for $\epsilon$ and $K$.
However, $\epsilon$
and $K$ can be chosen in a range of values, but will lead to different lower and upper bounds. We will show in Section \ref{sec:ex} through simulations that our LPE algorithm is generally robust to choice of parameter $K$.

\begin{lem}\label{lem:cond}
Suppose $K=cm^{2/5}$ and denote
$\hat{p}_K(\eta)=\frac{1}{m}\sum_{x_i\in S_1}
\mathbb{I}_{\{R_{S_2}(\eta)\leq
 R_{S_1}(x_i)\}}$. We have
\[\mathcal{P}_0\left(|\mathbb{E}_{S_1}\left[\hat{p}_K(\eta)\right]-\hat{p}_K(\eta)|>\delta\right)\leq 2e^{-\frac{2\delta^2m^{1/5}}{c^2\gamma_d^2}}\]
where $\gamma_d$ is a constant and is defined as the minimal number
of cones centered at the origin of angle $\pi/6$ that cover
$\mathbb{R}^d$.
\end{lem}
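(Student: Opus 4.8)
The plan is to recognize $\hat p_K(\eta)$ as a deterministic function of the $m{+}1$ i.i.d. sample points making up $S_1$ and to apply McDiarmid's bounded-differences inequality. Since $S_1$ and $S_2$ are disjoint and drawn independently, $R_{S_2}(\eta)$ is independent of $S_1$, so I would condition on $S_2$ throughout and treat the threshold $t:=R_{S_2}(\eta)$ as a fixed constant. Then $\hat p_K(\eta)=\frac1m\sum_{x_i\in S_1}\mathbb{I}_{\{t\le R_{S_1}(x_i)\}}=:f(x_0,\dots,x_m)$, and the claimed tail bound follows once I control how much $f$ can move when a single input is perturbed. Because the constant I obtain will not depend on the conditioning value $t$, the conditional estimate immediately upgrades to the stated unconditional bound.

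The crux is the bounded-differences constant. Replacing a single point $x_j$ by $x_j'$ can alter only the following indicator terms: the term $i=j$ itself, and those terms $i\ne j$ for which $x_j$ (respectively $x_j'$) lies among the $K$ nearest neighbors of $x_i$ within $S_1$ (resp.\ $S_1'$), since only for such $i$ can the $K$-th nearest-neighbor distance $R_{S_1}(x_i)$ change. Each affected indicator changes by at most $1$, so the change in $f$ is at most $1/m$ times the number of affected indices. The key geometric fact I would invoke is that in $\mathbb R^d$ a single point can be among the $K$ nearest neighbors of at most $K\gamma_d$ other points, where $\gamma_d$ is the minimal number of cones of angle $\pi/6$ covering $\mathbb R^d$. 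I would prove this by the standard cone argument: partition $\mathbb R^d$ into $\gamma_d$ cones with apex at $x_j$ so that any two points $a,b$ in the same cone subtend an angle $\angle a x_j b<\pi/3$ at $x_j$; then $\cos\angle a x_j b>1/2$ and the law of cosines gives $|a-b|<\max(|a-x_j|,|b-x_j|)$. Applying this to any $K{+}1$ points of one cone that all count $x_j$ among their $K$ nearest neighbors, the farthest such point would have $K$ strictly closer competitors than $x_j$ inside the cone, a contradiction; hence each cone contributes at most $K$ such points and $x_j$ is a $K$-nearest neighbor of at most $K\gamma_d$ points. Absorbing the remaining $O(1)$ contributions into the bookkeeping, each coordinate then has bounded difference $c_i\le K\gamma_d/m$.

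Finally, McDiarmid's inequality yields $\mathcal P_0\!\left(|f-\mathbb E f|>\delta\right)\le 2\exp\!\left(-2\delta^2/\sum_i c_i^2\right)$, and with $c_i\le K\gamma_d/m$ over the $\approx m$ coordinates one gets $\sum_i c_i^2\le K^2\gamma_d^2/m$. Substituting $K=cm^{2/5}$ gives $\sum_i c_i^2\le c^2\gamma_d^2 m^{-1/5}$, which produces exactly the stated bound $2\exp\!\left(-2\delta^2 m^{1/5}/(c^2\gamma_d^2)\right)$. I expect the main obstacle to be the geometric packing lemma of the second step: everything else is a mechanical application of McDiarmid, but the $K\gamma_d$ bound on how many points can share $x_j$ as a $K$-nearest neighbor is precisely what prevents the bounded difference from degenerating to $O(1)$ and is what makes the constant $\gamma_d$ surface in the exponent.
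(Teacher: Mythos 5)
Your proposal is correct and follows essentially the same route as the paper: McDiarmid's bounded-differences inequality, with the coordinate-wise difference controlled by the fact that a single point can be among the $K$ nearest neighbors of at most $K\gamma_d$ others. The only difference is that you prove this packing fact via the cone argument (and are explicit about conditioning on $S_2$), whereas the paper simply cites it as Corollary 11.1 of Devroye--Gy\"orfi--Lugosi.
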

\begin{proof}
We can not apply Law of Large Number in this case because
$\mathbb{I}_{\{R_{S_2}(\eta)\leq
 R_{S_1}(x_i)\}}$ are correlated. Instead, we need to use the more generalized concentration-of-measure inequality such as MacDiarmid's inequality\cite{Mac}. Denote
$F(x_0,\cdots,x_m)=\frac{1}{m}\sum_{x_i\in
S_1}\mathbb{I}_{\{R_{S_2}(\eta)\leq R_{S_1}(x_i)\}}$. From Corollary
11.1 in \cite{Lugosi},
\begin{eqnarray}\label{eq:mac}
\sup_{x_0,\cdots,x_m,x_i'}|F(x_0,\cdots,x_i,\cdots,x_m)-F(x_0,\cdots,x_i',\cdots,x_n)|\leq
K\gamma_d/m\end{eqnarray}


Then the lemma directly follows from applying McDiarmid's
inequality.
\end{proof}

Theorem \ref{thm2} directly follows from the combination of Lemma
\ref{lem:meanK} and Lemma \ref{lem:cond} and a standard application of the first Borel-Cantelli lemma. We have used Euclidean distance
in Theorem \ref{thm2}. When the support of $f_0$ lies on a lower dimensional manifold (say $d'< d$) adopting the geodesic metric leads to faster convergence. It turns out that $d'$ replaces $d$ in the expression for
$\Delta_1$ in Lemma 3.

\section{Experiments}\label{sec:ex}
We apply our method on both artificial and real-world data. Our
method enables plotting the entire ROC curve by varying the thresholds on our scores. 

To test the sensitivity of K-LPE to parameter changes, we first run
$K$-LPE on the benchmark artificial data-set {\tt Banana}
\cite{banana} with $K$ varying from $2$ to $12$. {\tt Banana}
dataset contains points with their labels($+1$ or $-1$). We randomly
pick $109$ points with $+1$ label and regard them as the nominal
training data. The test data comprises of 108 $+1$ data and 183 $-1$
data (ground truth) and the algorithm is supposed to predict $+1$
data as ``nominal'' and $-1$ data as ``anomaly''. See Figure
\ref{fig:robust}(a) for the configuration of the training points and
test points. Scores computed for test set using Equation
\ref{eq:LPE1} is oblivious to true $f_1$ density ($-1$ labels).
Euclidean distance metric is adopted for this example.

False alarm (also called false positive) is defined as the
percentage of nominal points that are predicted as anomaly by the
algorithm. To control false alarm at level $\alpha$, point with
score $< \alpha$ is predicted as anomaly. Empirical false alarm and
true positives (percentage of anomalies declared as anomaly) can be
computed from ground truth. We vary $\alpha$ to obtain the empirical
ROC curve. We follow this procedure for all the other experiments in
this section. We are relatively insensitive to $K$ as shown in
Figure \ref{fig:robust}(b).

For comparison we plot the empirical ROC curve of the one-class SVM
of \cite{Scholkopf}. There are two tuning parameters in OC-SVM ---
bandwidth $c$ (we use RBF kernel) and $\nu\in(0,1)$ (which is
supposed to control FA). Note that training data {\em does not}
contain $-1$ labels and this implies we can never make use of $-1$
labels to cross-validate, or, to optimize over the choice of pair
$(c,\nu)$. In our OC-SVM implementation, by following the same
procedure, we can obtain the empirical ROC curve by varying $\nu$
but {\em fixing} a certain bandwidth $c$. Finally we iterated over
different $c$ to obtain the best (in terms of AUC) ROC curve and it
turns out to be $c=1.5$. Fixing $c$ for entire ROC is equivalent to
fixing $K$ in our score function. Note that in real practice what
can be done is even worse than this implementation because there is
also no natural way to optimize over $c$ without being revealed the
$-1$ labels.

In Figure~\ref{fig:robust}(b), we can see that our algorithm is
consistently better than one-class SVM on the {\tt Banana} dataset.
Furthermore, we found that choosing suitable tuning parameters to
control false alarms is generally difficult in the one-class SVM
approach. In our approach if we set $\alpha = 0.05$ we get empirical
$FA = 0.06$ and for $\alpha = 0.08$, empirical $FA = 0.09$. For
OC-SVM we can not see any natural way of picking $c$ and $\nu$ to
control FA rate based only on training data.


\begin{figure}[t]
\begin{centering}
\begin{minipage}[t]{.48\textwidth}
\includegraphics[width = 1\textwidth]{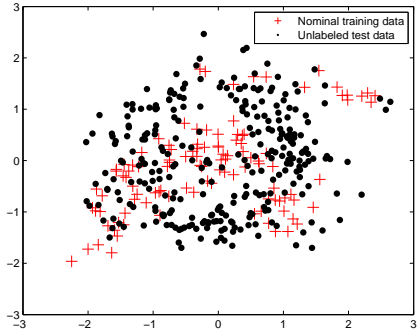}\\
\makebox[7 cm]{(a) Configuration of {\tt banana} data}
\end{minipage}
\begin{minipage}[t]{.48\textwidth}
\includegraphics[width = 1\textwidth]{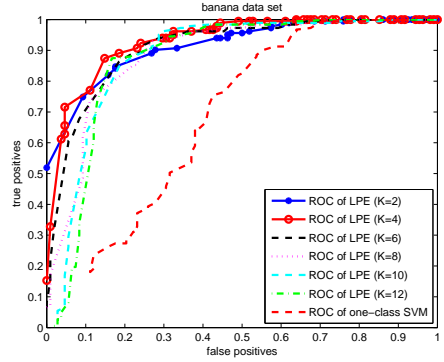}\\
\makebox[7 cm]{(b) SVM vs. K-LPE for Banana Data}
\end{minipage}
\caption{\small \sl Performance Robustness of LPE;(a) The
configuration of the nominal training points (red `+') and unlabeled
test points (black ` $\bullet$') for the {\tt banana} dataset
\cite{banana}; (b) Empirical ROC curve of $K$-LPE on the {\tt
banana} dataset
 with $K=2,4,6,8,10,12$ (with $n=400$) vs the empirical
ROC curve of one class SVM developed in \cite{Scholkopf}.}
\label{fig:robust}
\end{centering}
\end{figure}

\begin{figure}[t]
\begin{centering}
\begin{minipage}[t]{.48\textwidth}
\includegraphics[width = 1\textwidth]{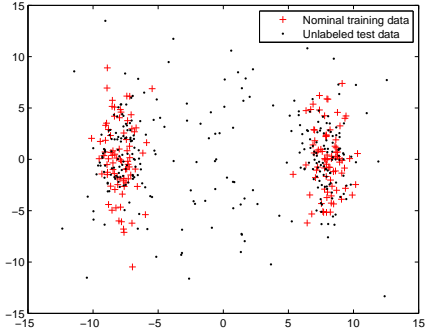}\\
\makebox[7 cm]{(a) Configuration of data}
\end{minipage}
\begin{minipage}[t]{.48\textwidth}
\includegraphics[width = 1\textwidth]{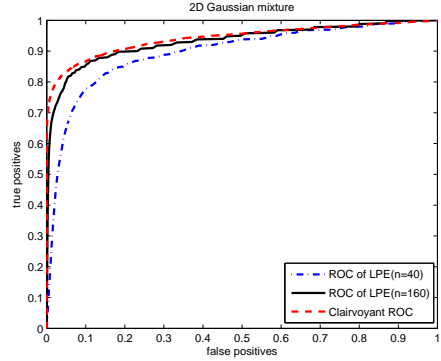}\\
\makebox[7 cm]{(b) Clairvoyant vs. K-LPE}
\end{minipage}
\caption{\small \sl Clairvoyant ROC curve vs. K-LPE; (a)
Configuration of the nominal training points and unlabeled test
points for the data given by Equation \ref{eq:f0f1}; (b) Averaged
(over $15$ trials) empirical ROC curves of $K$-LPE algorithm vs
clairvoyant ROC curve (when $f_0$ is given by Equation
\ref{eq:f0f1}) for $K=6$ and for different values of $n$
($n=40,160$).} \label{fig:clair}
\end{centering}
\end{figure}

In Figure \ref{fig:clair}, we apply our $K$-LPE to another 2D
artificial example where the nominal distribution $f_0$ is a mixture
Gaussian and the anomalous distribution is very close to uniform
(see Figure \ref{fig:clair}(a) for their configuration):
\begin{eqnarray}\label{eq:f0f1}
f_0\sim\frac{1}{2}\mathcal{N}\left(\begin{bmatrix}8\\0\end{bmatrix},\begin{bmatrix}1&
0\\0&
9\end{bmatrix}\right)+\frac{1}{2}\mathcal{N}\left(\begin{bmatrix}-8\\0\end{bmatrix},\begin{bmatrix}1&
0\\0& 9\end{bmatrix}\right), \quad
f_1\sim\mathcal{N}\left(0,\begin{bmatrix}49& 0\\0&
49\end{bmatrix}\right)\end{eqnarray} In this example, we can exactly
compute the optimal ROC curve. We call this curve the {\em
Clairvoyant ROC} (the red dashed curve in Figure
\ref{fig:clair}(b)). The other two curves are averaged (over $15$
trials) empirical ROC curves with respect to different sizes of
training sample ($n=40,160$) for $K=6$. Larger $n$ results in better
ROC curve. We see that for a relatively small training set of size
$160$ the average empirical ROC curve is very close to the
clairvoyant ROC curve.

Next, we ran LPE on three real-world datasets: {\tt Wine}, {\tt
Ionosphere}\cite{Asuncion+Newman:2007} and MNIST US Postal Service
({\tt USPS}) database of handwritten digits. The procedure and setup
of the experiments is almost the same as the that of the {\tt
Banana} data set. However, there are two differences. (1) If the
number of different labels is greater than two, we always treat
points with one particular label as nominal($+1$) and regard the
points with other labels as anomalous($-1$). For example, for the
{\tt USPS} dataset, we regard instances of digit $0$ as nominal
training samples and instances of digits $1,\cdots,9$ as anomaly.
(2) For high dimensional data set, the data points are normalized to
be within $[0,1]^d$ and we use geodesic distance
\cite{Tenenbaum}(instead of Euclidean distance) as the input to LPE.


\setlength{\unitlength}{1\textwidth}

\begin{figure*}[!htb]
\centering
\begin{picture}(1,.30)
\put(-0.03,0){\includegraphics[scale = 0.33]{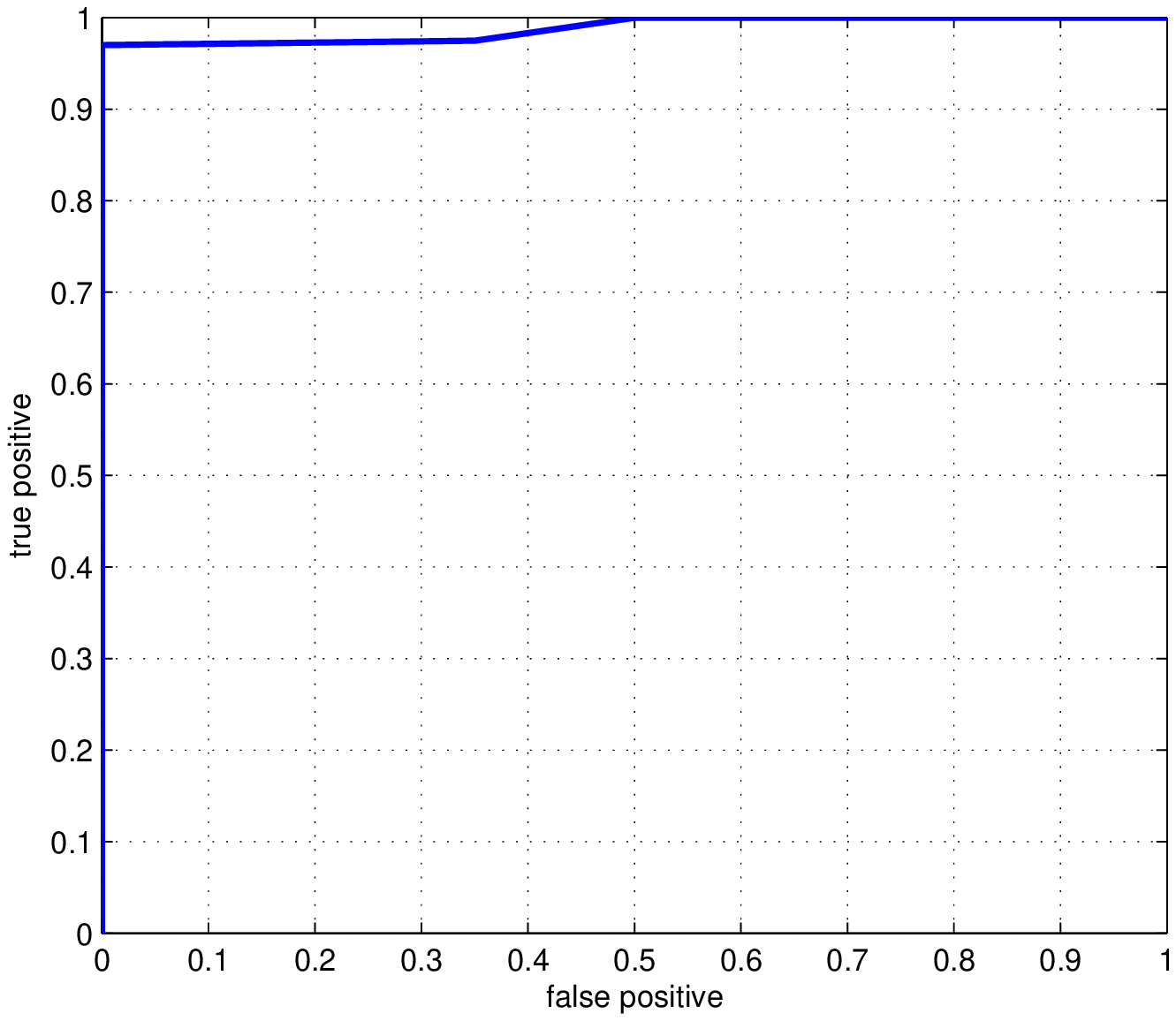}}
\put(0.31,0){\includegraphics[scale = 0.33]{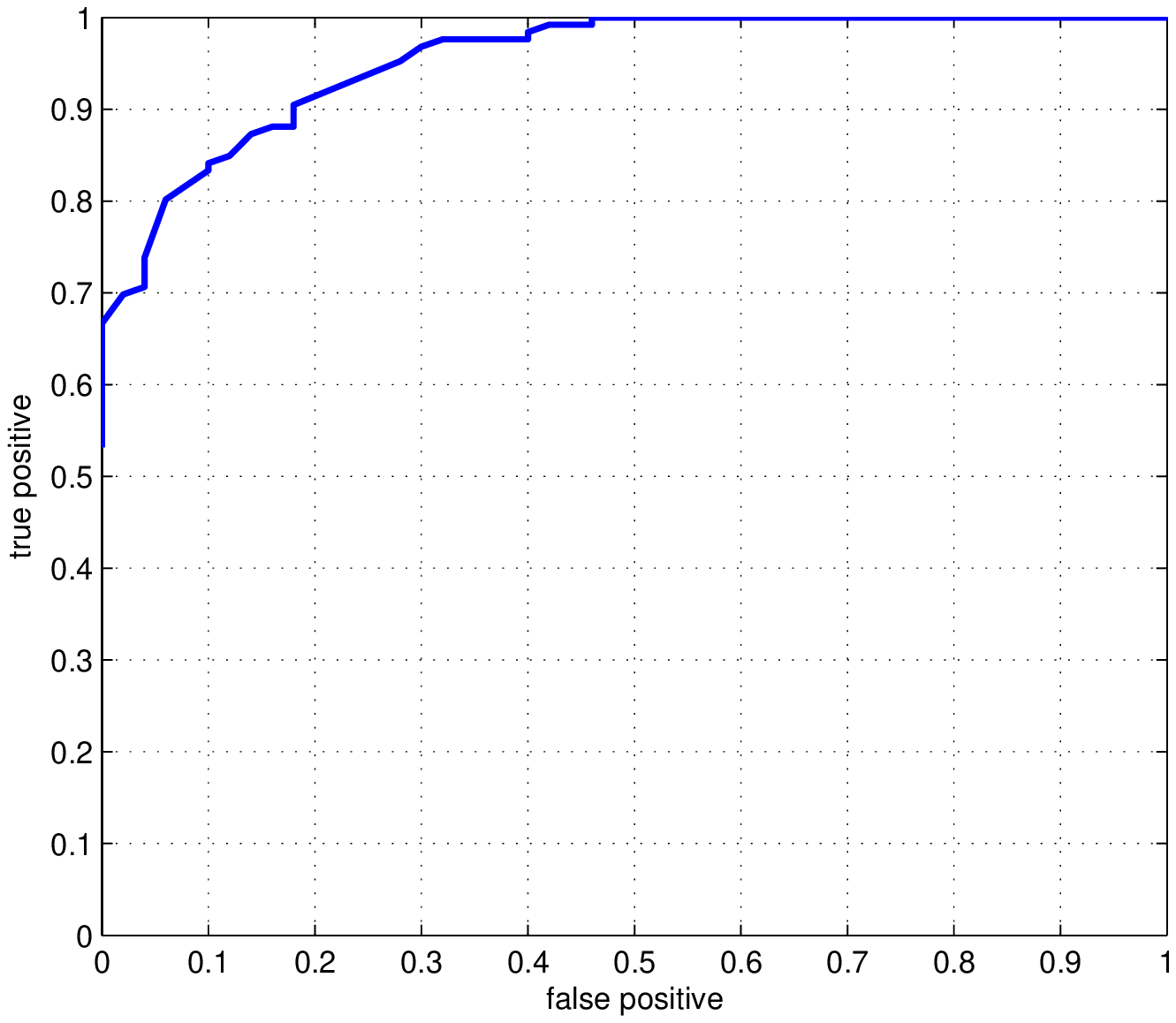}}
\put(0.65,0){\includegraphics[scale = 0.33]{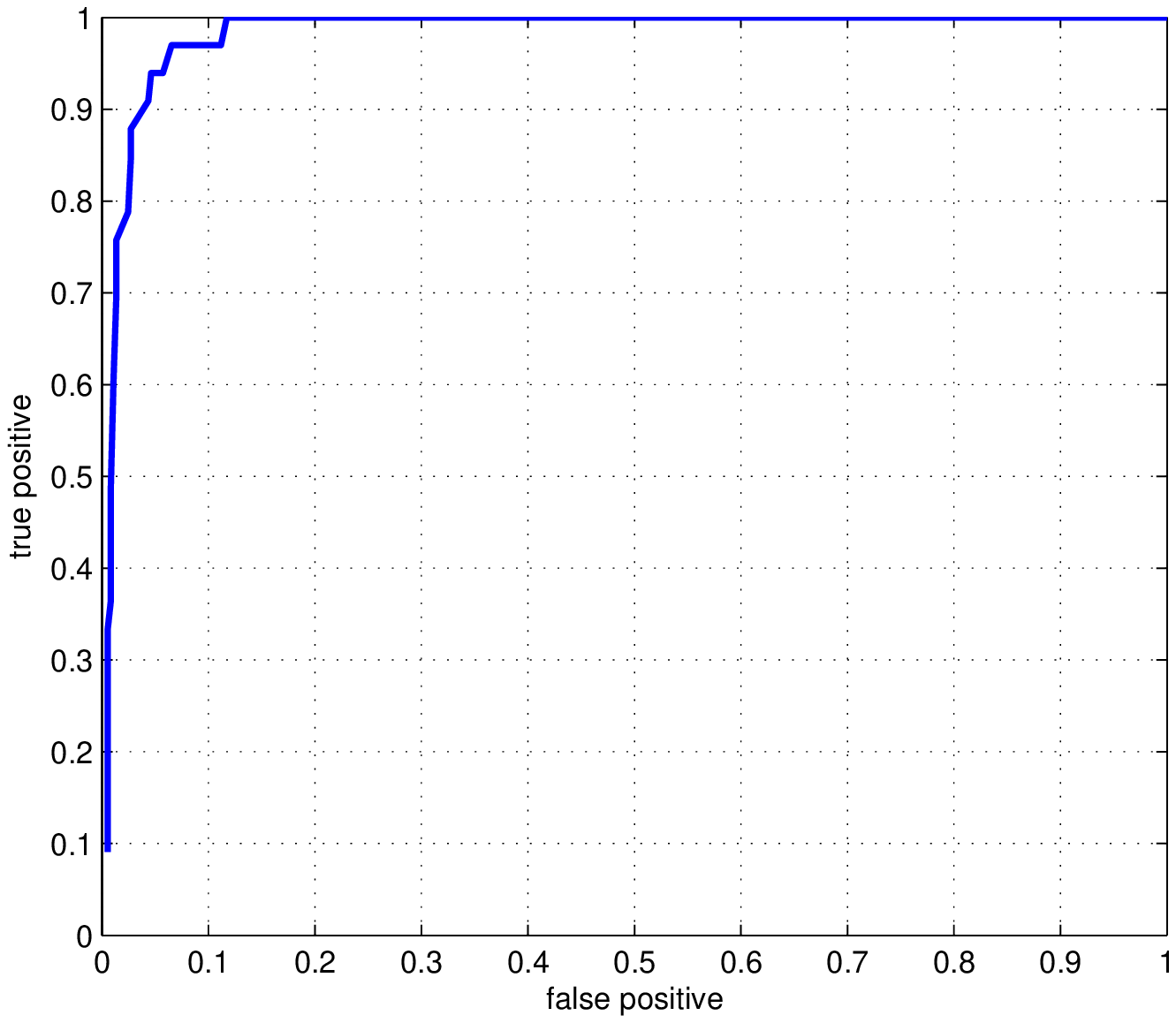}}
\put(-0.03,-.01){\makebox[0.33\textwidth]{(a) {\tt Wine}}}
\put(0.31,-0.01){\makebox[0.33\textwidth]{(b) {\tt Ionosphere}}}
\put(0.65,-0.01){\makebox[0.33\textwidth]{(c) {\tt USPS}}}
\end{picture}
\caption{\small \sl ROC curves on real datasets via {\sc LPE}; (a) {\tt
Wine} dataset with $D=13, n=39, \epsilon=0.9$; (b) {\tt Ionosphere}
dataset with $D=34,n=175,K=9$; (c) {\tt USPS} dataset with
$D=256,n=400,K=9$.}\label{fig:real}
\end{figure*}

The ROC curves of these three datasets are shown in
Figure~\ref{fig:real}. In {\tt Wine} dataset, the dimension of the
feature space is $13$. The training set is composed of $39$ data
points and we apply the $\epsilon$-LPE algorithm with $\epsilon =
0.9$. The test set is a mixture of 20 nominal points and $158$
anomaly points (ground truth). In {\tt Ionosphere} dataset, the
dimension of the feature space is $34$. The training set is composed
of $175$ data points and we apply the $K$-LPE algorithm with $K=9$.
The test set is a mixture of $50$ nominal points and $126$ anomaly
points (ground truth). In {\tt USPS} dataset, the dimension of the
feature space is $16\times 16=256$. The training set is composed of
$400$ data points and we apply the $K$-LPE algorithm with $K=9$. The
test set is a mixture of $367$ nominal points and $33$ anomaly
points (ground truth).

For comparison purposes we note that for the {\tt USPS} data set by
setting $\alpha = 0.5$ we get empirical false-positive $6.1\%$ and
empirical false alarm rate $5.7\%$ (In contrast $FP=7\%$ and
$FA=9\%$ with $\nu=5\%$ for OC-SVM as reported in \cite{Scholkopf}).
Practically we find that $K$-LPE is more preferable to
$\epsilon$-LPE due to easiness of choosing the parameter $K$. We
find that the value of $K$ is relatively independent of dimension
$d$. As a rule of thumb we
found that setting $K$ around $n^{2/5}$ was generally effective. 

\section{Conclusion}\label{sec:dis}
In this paper, we proposed a novel non-parametric adaptive anomaly detection algorithm which leads to a computationally efficient solution with provable optimality guarantees. Our algorithm takes a K-nearest neighbor graph as an input and produces a score for each test point. Scores turn out to be empirical estimates of the volume of minimum volume level sets containing the test point. While minimum volume level sets provide an optimal characterization for anomaly detection, they are high dimensional quantities and generally difficult to reliably compute in high dimensional feature spaces. Nevertheless, a sufficient statistic for optimal tradeoff between false alarms and misses is the volume of the MV set itself, which is a real number. By computing score functions we avoid computing high dimensional quantities and still ensure optimal control of false alarms and misses.
The computational cost of our algorithm scales linearly in dimension and quadratically in data size.

\footnotesize
\bibliographystyle{IEEEtran}
\bibliography{nips2009}

%
%
%
%

\end{document}